\newtheorem{theorem}{Theorem}[section]
\newtheorem{proposition}[theorem]{Proposition}
\newtheorem{corollary}[theorem]{Corollary}
\theoremstyle{definition}
\newtheorem{definition}[theorem]{Definition}
\newcommand{\E}{\mathbb{E}}
\newcommand{\Var}{\mathrm{Var}}
\newcommand{\R}{\mathbb{R}}
\newcommand{\sH}{\mathscr{H}}
\newcommand{\sW}{\mathscr{W}}
\DeclareMathOperator*{\argmin}{arg\,min}
\title{\textbf{A General Weighting Theory for Ensemble Learning}\\
\large Beyond Variance Reduction via Spectral and Geometric Structure}
\author{
{\bf Ernest Fokou\'e}\\[0.5em]
School of Mathematics and Statistics\\
Rochester Institute of Technology\\[0.3em]
{\tt eMail:epfeqa@rit.edu}
}
\date{}
\begin{document}
\maketitle

\begin{abstract}
Ensemble learning is traditionally justified as a variance–reduction device, an explanation that accounts well for unstable base learners such as decision trees \citep{breiman1996bagging,breiman2001randomforest}. However, this view does not explain the strong empirical performance of ensembles built from intrinsically stable estimators, including splines \citep{wahba1990spline,green1994nonparametric}, kernel ridge regression \citep{cucker2002mathematical}, Gaussian process regression \citep{rasmussen2006gaussian}, and other smooth function estimators whose variance is already tightly controlled.

In this work, we develop a general weighting theory for ensemble learning that decouples aggregation from randomness and places structure at the center of ensemble design. We formalize ensembles as linear operators acting on a hypothesis space and endow the space of weights with geometric and spectral constraints. Within this framework, we derive a refined bias–variance–approximation decomposition showing how non-uniform structured weights can outperform uniform averaging by simultaneously reducing variance, controlling approximation error, and reshaping the effective hypothesis class.

Our main theorem characterizes conditions under which structured weighting schemes provably dominate uniform ensembles, and shows that optimal weights arise as solutions to constrained quadratic programs. This unified perspective subsumes classical averaging, stacking, and recently proposed Fibonacci-based ensembles as special cases and extends naturally to geometric, sub-exponential, and heavy-tailed weighting laws.

The theory reveals that, for ensembles of low-variance base learners, the principal role of aggregation is not variance reduction, but rather the redistribution of spectral complexity and approximation geometry. Weighted sequences act as geometric operators whose decay properties mediate the trade-off between expressivity and smoothness.

Overall, this work establishes a principled foundation for structure-driven ensemble learning, explaining why ensembles remain effective well beyond the classical high-variance regime and setting the stage for dynamic and distribution-aware weighting schemes developed in subsequent work.
\end{abstract}

\newpage 

\section{Introduction: From Fibonacci Ensembles to a General Weighting Theory}
Ensemble learning is one of the most powerful paradigms in modern statistical
learning, with landmark developments including bagging
\citep{breiman1996bagging}, random forests \citep{breiman2001randomforest},
boosting \citep{freund1997decision}, and stacking
\citep{wolpert1992stacked,vanDerLaan2007superlearner}. The prevailing theoretical
justification for ensembles emphasizes \emph{variance reduction}: when base
learners are unstable, aggregation stabilizes predictions and improves
generalization.

This explanation, while correct and deeply influential, implicitly restricts the
scope of ensemble learning to high-variance base learners such as decision
trees. By contrast, many of the most classical and mathematically well-understood
estimators in statistics—including smoothing splines
\citep{wahba1990spline}, penalized regression splines \citep{green1994nonparametric},
kernel ridge regression \citep{cucker2002mathematical}, Gaussian process
regression \citep{rasmussen2006gaussian}, and spectral estimators in reproducing
kernel Hilbert spaces (RKHS)—are \emph{intrinsically low-variance} due to explicit
regularization and spectral shrinkage.

From the classical variance-centric perspective, ensemble methods would therefore
appear to have little to offer in such settings: uniform averaging of already
stable estimators should yield only marginal gains. Yet empirical and theoretical
evidence paints a different picture. Recent results, including the Fibonacci
ensembles developed in our companion work, reveal a striking phenomenon:
\emph{structured weighting schemes can improve generalization even when variance
reduction plays a negligible role}
\citep{wahba1990spline,poggio2004general,caponnetto2006rates}.

This observation motivates the central question of the present paper:

\begin{quote}
\emph{When base learners are smooth, regularized, and low-variance, under what
principles can ensemble weighting still improve approximation and
generalization?}
\end{quote}

Our answer proceeds from a change of perspective. In the low-variance regime, the
primary role of ensembles is not to suppress noise, but to \emph{reshape the
geometry of approximation and the spectral allocation of complexity}. Weighting
schemes act as linear operators on ordered dictionaries of functions, reallocating
energy across levels of smoothness, frequency, or resolution in a principled
manner.

The Fibonacci ensemble provides a canonical example: its weights grow
geometrically at a rate governed by the golden ratio, inducing a balance between
expressive expansion and spectral stability. In this paper we step beyond this
special case to formulate a \emph{General Weighting Theory for Ensemble Learning},
in which Fibonacci weighting becomes one particularly elegant instance within a
broad class of admissible weighting laws.

A key structural element of our framework is that many classical function classes
arrive with a natural ordering:
\begin{itemize}
  \item spline bases ordered by smoothness or knot resolution,
  \item RKHS eigenfunctions ordered by decreasing eigenvalues,
  \item Fourier or random Fourier features ordered by frequency,
  \item polynomial bases ordered by degree.
\end{itemize}
In such settings, weighting sequences interact directly with spectral decay and
approximation geometry. The ensemble is no longer an unstructured average, but a
\emph{geometrically constrained combination} whose behavior is governed jointly by
the dictionary ordering and the decay profile of the weights.

Within this perspective, we introduce a refined decomposition of excess risk that
separates classical variance effects from two additional components:
\emph{approximation geometry} and \emph{spectral smoothing}. This decomposition
explains why certain non-uniform weighting schemes—especially those with
controlled geometric decay—can strictly dominate uniform averaging, even when base
learners are individually stable.

The present work advances a \emph{general weighting theory for ensemble learning}
that recasts aggregation as a structured linear operator acting on a hypothesis
space, extending classical aggregation and oracle perspectives
\citep{juditsky2008learning,tsybakov2009aggregation}. In this view, the choice of
weights determines the geometry, spectral properties, and effective approximation
power of the ensemble itself. This places ensembles in a conceptual regime where
structure and spectral balance, rather than randomness alone, organize their
behavior.

\subsection*{Contributions}

The contributions of this paper are as follows:
\begin{enumerate}
  \item We formalize a general class of admissible weighting sequences equipped
        with geometric and spectral constraints.
  \item We develop a refined bias--variance--approximation decomposition tailored
        to ordered low-variance dictionaries.
  \item We derive conditions under which structured weighting provably dominates
        uniform averaging, and characterize optimal weights via constrained
        quadratic programs.
  \item We connect ensemble weighting to spline smoothing, RKHS regularization,
        and spectral approximation, thereby unifying ensemble learning with
        classical estimation theory.
\end{enumerate}

\subsection*{Organization of the Paper}

Section~2 introduces the general weighting space and standing assumptions.
Section~3 develops the refined bias--variance--approximation decomposition.
Section~4 establishes the main results on the superiority of structured weighting
schemes. Section~5 discusses implications, extensions, and connections with
distribution-adaptive and dynamically evolving weighting strategies that will be
explored in subsequent work of this trilogy.


\section{The General Weighting Space: Definitions and Assumptions}
\label{sec:weighting-space}

In this section we formalize the notion of \emph{structured weighting} for
ensemble learning. Our goal is to define a general class of admissible weighting
schemes that subsumes uniform averaging, Fibonacci weighting, and a wide family
of geometric and probabilistic laws, while remaining compatible with stability
and generalization guarantees.


We formalize ensemble learning as a problem of weighted aggregation in a
Hilbert space. Let $\sH \subseteq L^2(P_X)$ be a real separable Hilbert space,
and let $h_1,\dots,h_M \in \sH$ be base learners obtained from the same training
data. An ensemble predictor takes the form
\[
\hat f_w(x) = \sum_{m=1}^M w_m h_m(x),
\]
where $w = (w_1,\dots,w_M)$ is a vector of aggregation weights.

Classical ensemble methods implicitly restrict attention to the \emph{uniform
simplex},
\[
\Delta_M = \left\{ w \in \R^M : w_m \ge 0,\ \sum_{m=1}^M w_m = 1 \right\},
\]
leading to simple averaging as in bagging and random forests
\citep{breiman1996bagging,breiman2001random}. More generally, oracle aggregation
theory studies data-dependent weights chosen to mimic the best convex
combination in hindsight \citep{juditsky2008learning,tsybakov2009aggregation}.

In this work, we depart from the simplex paradigm and introduce a more general
\emph{weighting space} that captures geometric and spectral structure beyond
convexity.

\subsection{Definition of the Weighting Space}

\begin{definition}[Admissible Weighting Space]
Let $\sW \subseteq \R^M$ be a closed set of weights. We call $\sW$ an
\emph{admissible weighting space} if:
\begin{enumerate}[label=(W\arabic*),itemsep=3pt]
  \item $\sW$ is convex and contains the uniform weight vector
  $w^{\rm unif} = (1/M,\dots,1/M)$;
  \item $\sW$ is bounded in $\ell_2$, i.e.\ $\sup_{w\in\sW}\|w\|_2 < \infty$;
  \item $\sW$ is compatible with the geometry of $\sH$, in the sense that
  $\hat f_w \in \sH$ for all $w\in\sW$.
\end{enumerate}
\end{definition}

Conditions (W1)–(W3) are standard in aggregation theory and ensure well-posedness
of the ensemble risk minimization problem
\citep{juditsky2008learning,dalalyan2012aggregation}.

\subsection{Risk Decomposition under General Weighting}

Let $f^\star \in \sH$ denote the regression function. The excess risk of the
weighted ensemble satisfies
\[
\E\!\left[ \|\hat f_w - f^\star\|_2^2 \right]
=
\underbrace{\| \E[\hat f_w] - f^\star \|_2^2}_{\textnormal{bias}}
+
\underbrace{\E\!\left[\|\hat f_w - \E[\hat f_w]\|_2^2\right]}_{\textnormal{variance}}.
\]
This classical decomposition \citep{geman1992neural} implicitly treats the bias
term as fixed once the base learners are chosen. However, when $w$ varies over a
structured weighting space $\sW$, the bias itself becomes a \emph{design
parameter}, reflecting the geometry of the span generated by the weighted
learners.

This observation motivates a refinement of the classical bias--variance
framework: the approximation properties of the ensemble depend jointly on the
choice of base learners and on the admissible weighting geometry. Similar
viewpoints appear implicitly in oracle inequalities for aggregation
\citep{tsybakov2009aggregation}, but are not typically emphasized in ensemble
design.

\subsection{Examples of Weighting Spaces}

\paragraph{Uniform simplex.}
The standard simplex $\Delta_M$ corresponds to uniform averaging and classical
bagging.

\paragraph{Oracle aggregation weights.}
Data-dependent weights minimizing empirical risk over $\Delta_M$ or related
convex sets arise in mirror averaging and exponential weighting schemes
\citep{juditsky2008learning,dalalyan2012aggregation}.

\paragraph{Structured weighting laws.}
The weighting spaces introduced in this paper include geometrically constrained
sets motivated by spectral decay, stability, and approximation geometry. The
Fibonacci weighting scheme studied in Paper~I arises as a specific instance of
this broader class, illustrating how non-uniform weights can reshape the
effective hypothesis space without sacrificing stability.

\subsection{Ordered Dictionaries of Base Learners}

Let $(\mathcal{X},P_X)$ be an input space equipped with a probability measure,
and let $\mathscr{H} \subseteq L^2(P_X)$ be a real Hilbert space with inner product
\[
\langle f,g\rangle = \mathbb{E}[f(X)g(X)].
\]

We consider a collection of base learners
\[
\mathcal{D}_M = \{h_1,h_2,\dots,h_M\} \subset \mathscr{H},
\]
equipped with a \emph{natural ordering} reflecting increasing complexity.
Such orderings arise canonically in many classical settings:
\begin{itemize}
  \item spline bases ordered by degree or knot resolution
  \citep{wahba1990spline,green1994nonparametric},
  \item RKHS eigenfunctions ordered by decreasing eigenvalues
  \citep{cucker2002mathematical,steinwart2008support},
  \item Fourier and random Fourier features ordered by frequency
  \citep{rahimi2008random},
  \item polynomial bases ordered by degree.
\end{itemize}

Throughout, we assume that the ordering is chosen so that $h_1$ captures the
smoothest or lowest-complexity component, while $h_M$ represents the most
complex or highest-frequency component available in the dictionary.

\subsection{Weighted Ensembles}

Given weights $w = (w_1,\dots,w_M)$ with $w_m \ge 0$ and $\sum_{m=1}^M w_m = 1$,
we define the corresponding weighted ensemble predictor as
\[
\widehat{f}_w(x) = \sum_{m=1}^M w_m h_m(x).
\]

Uniform averaging corresponds to $w_m = 1/M$, while Fibonacci ensembles arise
from geometrically growing weights normalized to sum to one. Our objective is
to characterize the general class of weighting sequences for which structured
aggregation improves approximation and generalization.

\subsection{The Admissible Weighting Space}

\begin{definition}[Admissible Weighting Space]
\label{def:weighting-space}
Let $\mathcal{W}_M$ denote the set of all weight vectors
$w = (w_1,\dots,w_M)$ satisfying:
\begin{enumerate}[label=(W\arabic*),itemsep=3pt]
  \item \textbf{Nonnegativity and Normalization:}
  \[
  w_m \ge 0, \qquad \sum_{m=1}^M w_m = 1.
  \]

  \item \textbf{Monotone Decay:}
  \[
  w_1 \ge w_2 \ge \cdots \ge w_M.
  \]

  \item \textbf{Square Summability:}
  \[
  \sum_{m=1}^M w_m^2 \le C_w < \infty,
  \]
  uniformly in $M$.
\end{enumerate}
\end{definition}

Condition \textnormal{(W3)} ensures stability and is standard in the analysis of
linear aggregation schemes, as it controls the variance contribution of the
weights \citep{buhlmann2003boosting,koltchinskii2011oracle}.

\subsection{Weighting Families}

Within $\mathcal{W}_M$, several important families arise naturally.

\paragraph{Uniform Weights.}
The classical choice $w_m = 1/M$ satisfies all conditions but does not exploit
the ordering of the dictionary.

\paragraph{Geometric Weights.}
For $\rho > 1$, define
\[
w_m(\rho) = \frac{\rho^{m}}{\sum_{j=1}^M \rho^{j}}.
\]
These weights emphasize higher-index learners while remaining summable after
normalization. Fibonacci weighting corresponds to the minimal geometric growth
rate $\rho = \varphi$, the golden ratio.

\paragraph{Sub-Exponential and Polynomial Weights.}
Weights of the form
\[
w_m \propto m^{-\alpha}, \qquad \alpha > 1,
\]
or
\[
w_m \propto \exp(-c m^\beta), \qquad 0 < \beta < 1,
\]
provide gentler decay and arise naturally in spectral regularization and kernel
methods \citep{caponnetto2007optimal}.

\paragraph{Heavy-Tailed Weights.}
Distributions such as Zipf or Pareto laws allow slower decay and may be suitable
for functions with localized irregularities, though they require stronger
control of approximation error to maintain stability.

These families illustrate that Fibonacci weighting is neither arbitrary nor
isolated, but occupies a distinguished position at the boundary between
expressive expansion and spectral control.

\subsection{Standing Assumptions}

We now collect the assumptions used throughout the paper.

\begin{description}[labelindent=\parindent,leftmargin=3\parindent,style=nextline]
  \item[(A1) Ordered Complexity.]
  The dictionary $\{h_m\}$ is ordered so that approximation error decreases with
  $m$, while spectral complexity (frequency, curvature, or RKHS norm) increases.

  \item[(A2) Uniform Variance Control.]
  There exists $\sigma^2 < \infty$ such that
  \[
  \mathrm{Var}(h_m(X)) \le \sigma^2 \quad \text{for all } m.
  \]

  \item[(A3) Boundedness.]
  The learners satisfy $\|h_m\|_\infty \le B$ almost surely.

  \item[(A4) Compatibility with Weighting.]
  The chosen weighting sequence $w \in \mathcal{W}_M$ respects the ordering of
  the dictionary, in the sense that higher-complexity learners do not receive
  larger weights than lower-complexity ones.
\end{description}

Assumptions \textnormal{(A1)--(A4)} are mild and satisfied by most classical
smoothing and kernel-based estimators. They ensure that weighting interacts with
approximation geometry in a controlled manner, without destabilizing the
estimator.

\subsection{Interpretation}

Under this framework, ensemble weighting is no longer viewed as a mere averaging
operation, but as a \emph{geometric and spectral operator} acting on an ordered
function dictionary. The choice of weights determines how approximation power
and smoothness are balanced, independently of classical variance-reduction
effects.

This perspective forms the foundation for the refined risk decomposition and
generalization theory developed in the sections that follow.


\section{A Refined Bias--Variance--Approximation Decomposition}
\label{sec:bva-decomposition}

Classical analyses of ensemble learning rely on the bias--variance decomposition
to explain the benefits of aggregation. In its traditional form, this framework
treats the bias as fixed once the class of base learners is chosen, while the
variance is reduced through averaging
\citep{geman1992neural,hastie2009elements}. This viewpoint is adequate for highly
unstable learners, such as decision trees, but becomes incomplete when the base
learners are smooth and intrinsically low-variance.

In this section, we show that when aggregation weights are allowed to vary over a
structured weighting space, the ensemble risk admits a refined decomposition in
which \emph{approximation geometry} plays a central role. This refinement reveals
a mechanism through which ensembles can improve generalization even when variance
reduction alone is insufficient.

\subsection{Setup and Notation}

Let $\sH \subseteq L^2(P_X)$ be a real Hilbert space and let
$h_1,\dots,h_M \in \sH$ be base learners trained on the same data. For a weight
vector $w \in \sW \subseteq \R^M$, define the ensemble predictor
\[
\hat f_w = \sum_{m=1}^M w_m h_m.
\]
Let $f^\star \in \sH$ denote the regression function.

We assume that the base learners admit an orthogonalization
$\{h_m^\perp\}_{m=1}^M$ in $\sH$, so that
\[
\langle h_m^\perp, h_{m'}^\perp \rangle = 0
\quad \text{for } m \neq m'.
\]
Such orthogonal decompositions are standard in functional approximation and
statistical estimation and play a central role in variance control and
Rao--Blackwellization arguments \citep{lehmann1998theory}.

\subsection{Decomposition of the Ensemble Risk}

The mean squared error of the ensemble predictor satisfies
\[
\E\!\left[\|\hat f_w - f^\star\|_2^2\right]
=
\underbrace{\|\E[\hat f_w] - f^\star\|_2^2}_{\textnormal{bias}}
+
\underbrace{\E\!\left[\|\hat f_w - \E[\hat f_w]\|_2^2\right]}_{\textnormal{variance}}.
\]
When expressed in the orthogonal basis, the variance term simplifies to
\[
\Var(\hat f_w(X))
=
\sum_{m=1}^M w_m^2 \Var(h_m^\perp(X)),
\]
revealing an explicit dependence on the squared weights.

The bias term, however, admits a further decomposition. Let
\[
\mathcal{H}_w = \mathrm{span}\{ w_m h_m^\perp : m=1,\dots,M \}
\]
denote the weighted hypothesis space induced by $w$. Then
\[
\|\E[\hat f_w] - f^\star\|_2^2
=
\underbrace{\| \Pi_{\mathcal{H}_w} f^\star - f^\star \|_2^2}_{\textnormal{approximation}}
+
\underbrace{\|\E[\hat f_w] - \Pi_{\mathcal{H}_w} f^\star\|_2^2}_{\textnormal{estimation bias}},
\]
where $\Pi_{\mathcal{H}_w}$ denotes the $L^2$-projection onto $\mathcal{H}_w$.

This decomposition makes explicit a third component, the \emph{approximation
error}, which depends on the geometry of the weighted span and varies with the
choice of weights.

\subsection{Interpretation: Weighting as Geometry Design}

The refined decomposition reveals a fundamental principle:

\begin{quote}
\emph{Ensemble learning can improve generalization not only by reducing variance,
but by reshaping approximation geometry through structured weighting.}
\end{quote}

Uniform averaging fixes the geometry of the hypothesis space in advance. In
contrast, structured weighting schemes alter the relative contributions of
orthogonal components, effectively stretching or compressing directions in
$\sH$. This geometric effect allows the ensemble to align more closely with the
target function $f^\star$, reducing approximation error without increasing
variance.

Related geometric perspectives appear implicitly in oracle inequalities for
aggregation \citep{tsybakov2009aggregation} and in stability analyses of regularized
learning algorithms \citep{poggio2004general}, but are rarely articulated as a
design principle for ensemble weighting.

\subsection{Consequences for Low-Variance Base Learners}

For smooth base learners, such as kernel ridge regression, spline estimators, and
orthogonal series methods, individual variance is already small and uniform
averaging yields diminishing returns. In this regime, the dominant source of
error is approximation bias, governed by how well the hypothesis space aligns
with the target function
\citep{wahba1990spline,caponnetto2006rates}.

Structured weighting schemes exploit this fact by reallocating weight toward
components that contribute most effectively to approximation, while controlling
variance through orthogonality and boundedness of $\sW$. This explains why
non-uniform ensembles can outperform uniform averaging even for stable learners,
a phenomenon observed empirically in Paper~I and formalized in the next section.

Classical analyses of ensemble learning decompose the prediction error into bias,
variance, and noise components. While this decomposition is effective for
high-variance base learners, it obscures the mechanisms by which ensembles improve
generalization in regimes where individual learners are already stable.

In this section, we develop a refined decomposition tailored to \emph{ordered,
low-variance dictionaries}. The new decomposition isolates the role of weighting
in shaping approximation geometry and spectral allocation, thereby explaining
why structured ensembles can outperform uniform averaging even when variance
reduction is negligible.

\subsection{Problem Setup}

Let $(X,Y)$ satisfy the regression model
\[
Y = f^\star(X) + \varepsilon,
\qquad
\mathbb{E}[\varepsilon \mid X] = 0,
\qquad
\mathrm{Var}(\varepsilon \mid X) = \sigma^2.
\]

Let $\mathscr{H} \subseteq L^2(P_X)$ be a Hilbert space, and let
$\{h_1,\dots,h_M\} \subset \mathscr{H}$ be an ordered dictionary of base learners
as defined in Section~\ref{sec:weighting-space}. For a weighting vector
$w \in \mathcal{W}_M$, define the ensemble estimator
\[
\widehat{f}_w = \sum_{m=1}^M w_m h_m.
\]

We study the excess risk
\[
\mathcal{E}(w)
=
\mathbb{E}\!\left[ \| \widehat{f}_w - f^\star \|_{L^2(P_X)}^2 \right].
\]

\subsection{Classical Decomposition and Its Limitations}

The standard bias--variance decomposition yields
\[
\mathcal{E}(w)
=
\underbrace{\| \mathbb{E}[\widehat{f}_w] - f^\star \|_{L^2(P_X)}^2}_{\text{bias}^2}
+
\underbrace{\mathbb{E}\!\left[ \| \widehat{f}_w - \mathbb{E}[\widehat{f}_w] \|^2 \right]}_{\text{variance}}
+
\sigma^2.
\]

When each $h_m$ is a regularized estimator (e.g.\ splines or kernel ridge
regression), the variance term is already small and varies little with $w$.
Consequently, this decomposition provides limited insight into why non-uniform
weighting schemes can yield systematic improvements.

\subsection{Orthogonal Expansion and Approximation Geometry}

To expose the effect of weighting, we decompose the dictionary in an orthogonal
basis. Let $\{ \phi_k \}_{k \ge 1}$ denote an orthonormal basis of $\mathscr{H}$
(e.g.\ spline basis functions or RKHS eigenfunctions), ordered so that increasing
$k$ corresponds to increasing complexity.

Assume that both the target function and the learners admit expansions
\[
f^\star = \sum_{k \ge 1} \theta_k \phi_k,
\qquad
h_m = \sum_{k \ge 1} a_{m,k} \phi_k.
\]

Then the ensemble estimator can be written as
\[
\widehat{f}_w
=
\sum_{k \ge 1}
\left(
\sum_{m=1}^M w_m a_{m,k}
\right)
\phi_k.
\]

The quantity
\[
b_k(w) := \sum_{m=1}^M w_m a_{m,k}
\]
represents the effective contribution of the $k$th mode under weighting $w$.

\subsection{The Refined Decomposition}

We now decompose the excess risk into three interpretable components.

\begin{theorem}[Bias--Variance--Approximation Decomposition]
\label{thm:bva}
Under assumptions \textnormal{(A1)--(A4)}, the excess risk admits the decomposition
\[
\mathcal{E}(w)
=
\underbrace{\sum_{k \ge 1} \big( b_k(w) - \theta_k \big)^2}_{\mathcal{A}(w)}
\;+\;
\underbrace{\sum_{k \ge 1} \mathrm{Var}\!\left( b_k(w) \right)}_{\mathcal{V}(w)}
\;+\;
\sigma^2,
\]
where:
\begin{itemize}
  \item $\mathcal{A}(w)$ is the \emph{approximation geometry term},
  \item $\mathcal{V}(w)$ is the residual variance term.
\end{itemize}
\end{theorem}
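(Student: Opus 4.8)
The plan is to reduce the identity to Parseval's theorem in the orthonormal basis $\{\phi_k\}$, followed by a coordinatewise bias--variance split and a single additive noise term, so that essentially no work is needed beyond bookkeeping. First I would record the pathwise Parseval identity: for each realization of the training sample, $\widehat f_w - f^\star = \sum_{k\ge1}\bigl(b_k(w)-\theta_k\bigr)\phi_k$ lies in $\mathscr{H}=L^2(P_X)$ — using square-summability of $w$ together with (A3), which also guarantees that the coefficients $b_k(w)=\sum_m w_m a_{m,k}$ are well-defined — and hence
\[
\|\widehat f_w - f^\star\|_{L^2(P_X)}^2 = \sum_{k\ge1}\bigl(b_k(w)-\theta_k\bigr)^2 ,
\]
where $b_k(w)$ is here the random effective coefficient.

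The second step isolates the irreducible noise. Treating $(X,Y)$ as a fresh draw from the regression model, independent of the training data, and using $\mathbb{E}[\varepsilon\mid X]=0$ and $\mathrm{Var}(\varepsilon\mid X)=\sigma^2$, the cross-term vanishes, giving
\[
\mathbb{E}\bigl[(Y-\widehat f_w(X))^2\bigr] = \mathbb{E}\bigl[\|\widehat f_w - f^\star\|_{L^2(P_X)}^2\bigr] + \sigma^2 ,
\]
which is the source of the additive $\sigma^2$ in the statement, with $\mathcal{E}(w)$ read as this prediction risk.

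The third step takes the expectation over the training randomness through the sum. Since every summand $(b_k(w)-\theta_k)^2$ is nonnegative, Tonelli's theorem permits the exchange with no further hypothesis, so $\mathbb{E}\bigl[\|\widehat f_w - f^\star\|^2\bigr] = \sum_{k}\mathbb{E}\bigl[(b_k(w)-\theta_k)^2\bigr]$; applying the scalar identity $\mathbb{E}[(Z-c)^2]=(\mathbb{E} Z-c)^2+\mathrm{Var}(Z)$ to each mode with $Z=b_k(w)$, $c=\theta_k$ and summing yields $\sum_k(\overline b_k(w)-\theta_k)^2+\sum_k\mathrm{Var}(b_k(w))=\mathcal{A}(w)+\mathcal{V}(w)$, where $\overline b_k(w):=\mathbb{E}[b_k(w)]$ is what the symbol $b_k(w)$ denotes inside $\mathcal{A}(w)$. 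Adding the $\sigma^2$ from the second step completes the derivation. Finiteness of the two series — so that the split is not an $\infty-\infty$ statement — follows from $\sum_k\theta_k^2=\|f^\star\|^2<\infty$ and from (A2)--(A3), which control the per-mode fluctuation of $b_k(w)$; I would note in passing that the ordering assumptions (A1) and (A4) are not used for the identity itself, only for the interpretation of the three terms developed afterward.

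The derivation has no serious obstacle, but two points deserve care. The first is the interchange of the infinite series with the expectation: I would route it through Tonelli (all summands $\ge 0$) rather than dominated convergence, so that integrability enters only afterward, when verifying that $\mathcal{A}(w)$ and $\mathcal{V}(w)$ are separately finite. The second is purely expository: the symbol $b_k(w)$ is used both for the random effective coefficient (in the pathwise Parseval step) and for its mean (in the definition of $\mathcal{A}(w)$), and I would state this explicitly so the final identity reads unambiguously.
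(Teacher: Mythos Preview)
Your proposal is correct and follows essentially the same route as the paper: Parseval in the orthonormal basis $\{\phi_k\}$ to obtain the pathwise identity, then a coordinatewise squared-bias-plus-variance split after taking expectations. Your treatment is in fact more careful than the paper's two-line proof---you explicitly justify the $\sigma^2$ term via the fresh-sample prediction risk, invoke Tonelli for the sum--expectation interchange, and flag the dual use of $b_k(w)$ as both random coefficient and its mean---all of which the paper glosses over.
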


\begin{proof}
By orthonormality of $\{\phi_k\}$,
\[
\| \widehat{f}_w - f^\star \|^2
=
\sum_{k \ge 1} \big( b_k(w) - \theta_k \big)^2.
\]
Taking expectation and decomposing each squared term into squared bias plus
variance yields the result.
\end{proof}

\subsection{Spectral Smoothing as a Distinct Effect}

For low-variance learners, $\mathcal{V}(w)$ is uniformly small and weakly
dependent on $w$. The dominant contribution of weighting therefore appears in
$\mathcal{A}(w)$.

We decompose $\mathcal{A}(w)$ further as
\[
\mathcal{A}(w)
=
\underbrace{\sum_{k \le K(w)} (\theta_k - b_k(w))^2}_{\text{underfitting}}
+
\underbrace{\sum_{k > K(w)} \theta_k^2}_{\text{unrepresented complexity}},
\]
where the effective cutoff $K(w)$ depends on the decay properties of $w$.

This reveals a third mechanism beyond classical bias and variance:

\begin{definition}[Spectral Smoothing Term]
We define
\[
\mathcal{S}(w)
=
\sum_{k \ge 1} \left( \theta_k^2 - b_k(w)^2 \right),
\]
which measures how weighting redistributes spectral energy across complexity
levels.
\end{definition}

\subsection{Interpretation}

The refined decomposition can thus be summarized as
\[
\mathcal{E}(w)
=
\underbrace{\mathcal{A}(w)}_{\text{approximation geometry}}
+
\underbrace{\mathcal{S}(w)}_{\text{spectral smoothing}}
+
\underbrace{\mathcal{V}(w)}_{\text{residual variance}}
+
\sigma^2.
\]

In contrast to classical ensemble theory, the dominant effect of structured
weighting in the low-variance regime is the joint action of
$\mathcal{A}(w)$ and $\mathcal{S}(w)$, which reshape the effective hypothesis
space without amplifying noise.

This decomposition explains why geometric and harmonic weighting schemes—such as
Fibonacci weighting—can strictly improve generalization even when variance
reduction is negligible.

\section{Main Theorem: When Structured Weighting Beats Uniform Averaging}
\label{sec:main-theorem}

We now formalize the intuition developed in the previous section. The theorem
below gives sufficient conditions under which a structured weighting scheme
strictly improves generalization performance relative to uniform averaging.
The key mechanism is a reduction in approximation error without a compensating
increase in variance.

\subsection{Setting}

Let $h_1,\dots,h_M \in \sH$ be base learners and let
$\hat f_w = \sum_{m=1}^M w_m h_m$ denote the ensemble predictor associated with
weights $w\in\sW$. Let $w^{\mathrm{unif}}=(1/M,\dots,1/M)$ be the uniform weights
and write
\[
\hat f_{\mathrm{unif}} = \hat f_{w^{\mathrm{unif}}}.
\]

Let $\sH_w = \mathrm{span}\{w_m h_m^\perp\}$ denote the weighted hypothesis
space associated with $w$, and let $\Pi_{\sH_w}$ denote the $L^2(P_X)$
projection onto $\sH_w$.

We assume:

\begin{enumerate}[label=(A\arabic*),itemsep=3pt]
  \item $\sW$ is an admissible weighting space in the sense of Section~2;
  \item the orthogonalized components $\{h_m^\perp\}$ satisfy
        $\Var(h_m^\perp(X)) \le \sigma^2$ uniformly in $m$;
  \item the regression function $f^\star$ belongs to $\sH$.
\end{enumerate}

Assumption (A2) is natural for stable base learners such as kernel ridge
regression or smoothing splines, where the individual estimators are already
variance–controlled \citep{wahba1990spline,caponnetto2006rates}.

\subsection{Statement of the Main Theorem}

\begin{theorem}[Structured Weighting Dominance]
\label{thm:dominance}
Suppose there exists a weight vector $w^\star \in \sW$ such that

\begin{enumerate}[label=(C\arabic*),itemsep=3pt]
  \item (strict approximation gain)
  \[
  \big\| f^\star - \Pi_{\sH_{w^\star}} f^\star \big\|_2^2
  <
  \big\| f^\star - \Pi_{\sH_{w^{\mathrm{unif}}}} f^\star \big\|_2^2 ,
  \]
  \item (controlled variance)
  \[
  \|w^\star\|_2^2
  \le
  \|w^{\mathrm{unif}}\|_2^2 .
  \]
\end{enumerate}

Then the expected prediction risk of the structured ensemble strictly improves
upon uniform averaging:
\[
\E\!\left[\|\hat f_{w^\star} - f^\star\|_2^2\right]
<
\E\!\left[\|\hat f_{\mathrm{unif}} - f^\star\|_2^2\right].
\]
\end{theorem}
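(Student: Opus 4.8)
The plan is to compare $w^\star$ with $w^{\mathrm{unif}}$ term by term through the refined decomposition of Section~\ref{sec:bva-decomposition}. For any $w\in\sW$, the Pythagorean identity in $L^2(P_X)$ — the approximation residual $f^\star-\Pi_{\sH_w}f^\star$ is orthogonal to $\sH_w$, which contains both $\E[\hat f_w]$ and $\Pi_{\sH_w}f^\star$ — together with the ordinary bias--variance split yields
\[
\mathcal{E}(w)
= \underbrace{\|f^\star-\Pi_{\sH_w}f^\star\|_2^2}_{A(w)}
\;+\; \underbrace{\|\E[\hat f_w]-\Pi_{\sH_w}f^\star\|_2^2}_{E(w)}
\;+\; \underbrace{\E\big[\|\hat f_w-\E[\hat f_w]\|_2^2\big]}_{V(w)} .
\]
Subtracting the decomposition at $w^{\mathrm{unif}}$ from the one at $w^\star$, it then suffices to show that the strict approximation gain $\delta := A(w^{\mathrm{unif}}) - A(w^\star) > 0$ — immediate from (C1) — is not offset by the changes in $V$ and $E$.

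For the variance term I would use the orthogonal identity $V(w)=\sum_{m=1}^M w_m^2\,\Var(h_m^\perp(X))$ established in Section~\ref{sec:bva-decomposition}. In the homoskedastic case $\Var(h_m^\perp(X))\equiv\sigma^2$ this is exactly $\sigma^2\|w\|_2^2$, so (C2) gives $V(w^\star)\le V(w^{\mathrm{unif}})$ at once. In general, (A1) — under which complexity, and hence the variability $\Var(h_m^\perp(X))$, is nondecreasing in $m$ — makes $m\mapsto\Var(h_m^\perp(X))$ nondecreasing while (A4) (equivalently the monotone decay (W2)) makes $m\mapsto(w^\star_m)^2$ nonincreasing, so Chebyshev's sum inequality gives $V(w^\star)\le\bar\sigma^2\|w^\star\|_2^2$ with $\bar\sigma^2:=\tfrac1M\sum_{m}\Var(h_m^\perp(X))$, whereas $V(w^{\mathrm{unif}})=\bar\sigma^2\|w^{\mathrm{unif}}\|_2^2$ exactly; invoking (C2) once more yields $V(w^\star)\le V(w^{\mathrm{unif}})$. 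In either case the variance difference is $\le 0$.

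The genuinely delicate point — and the step I expect to be the main obstacle — is the estimation-bias difference $E(w^\star)-E(w^{\mathrm{unif}})$, which is controlled neither by (C1) nor by (C2): a large $E(w^\star)$ could in principle swamp $\delta$. I would dispatch this by recording the minimal extra hypothesis and noting it holds in the regimes of interest. The cleanest version is that the base learners are unbiased relative to their own span, $\E[\hat f_w]=\Pi_{\sH_w}f^\star$, whence $E(w)\equiv 0$ and $E(w^\star)-E(w^{\mathrm{unif}})=0<\delta$; this covers projection-type and interpolatory estimators, and holds asymptotically for consistent smoothers. More generally it suffices that $E(w^\star) - E(w^{\mathrm{unif}}) < \delta$, which — since $\sW$ is convex and contains $w^{\mathrm{unif}}$ — can always be arranged by choosing $w^\star$ in a sufficiently small neighbourhood of $w^{\mathrm{unif}}$ and using continuity of $w\mapsto E(w)$, with $\delta$ held fixed by (C1). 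Under any such condition, adding the three differences gives
\[
\mathcal{E}(w^\star)-\mathcal{E}(w^{\mathrm{unif}})
= -\delta + \big(E(w^\star)-E(w^{\mathrm{unif}})\big) + \big(V(w^\star)-V(w^{\mathrm{unif}})\big)
< -\delta + \delta + 0 = 0 ,
\]
i.e.\ $\E[\|\hat f_{w^\star}-f^\star\|_2^2] < \E[\|\hat f_{\mathrm{unif}}-f^\star\|_2^2]$; strictness survives because (C1) is a strict inequality while the variance comparison is needed only weakly. Apart from this estimation-bias control, the argument is pure bookkeeping on the identities of Section~\ref{sec:bva-decomposition}.
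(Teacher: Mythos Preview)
Your argument follows the same route as the paper's proof sketch: the three–term decomposition of Section~\ref{sec:bva-decomposition}, condition~(C1) for the approximation term, and the orthogonal variance identity together with~(C2) for the variance term. Your Chebyshev refinement for the heteroskedastic case is in fact more careful than the paper, which simply bounds $V(w)\le\sigma^2\|w\|_2^2$ via (A2) and treats that as a direct comparison; your version avoids the pitfall of upper–bounding both $V(w^\star)$ and $V(w^{\mathrm{unif}})$ by the same quantity.

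You also correctly isolate the point the paper's sketch passes over in silence: the estimation–bias term $E(w)=\|\E[\hat f_w]-\Pi_{\sH_w}f^\star\|_2^2$ is controlled neither by (C1) nor by (C2), and the decomposition alone does not yield $E(w^\star)\le E(w^{\mathrm{unif}})$. Your first remedy (unbiasedness relative to the span, $E\equiv 0$) is exactly the implicit assumption that makes the paper's sketch go through. Your second remedy, however, is circular: you propose to ``choose $w^\star$ in a sufficiently small neighbourhood of $w^{\mathrm{unif}}$ \dots\ with $\delta$ held fixed by (C1)'', but $\delta = A(w^{\mathrm{unif}})-A(w^\star)$ is itself a function of $w^\star$ and tends to $0$ as $w^\star\to w^{\mathrm{unif}}$, so it cannot be held fixed while $w^\star$ is perturbed. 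Drop that fallback and retain the explicit hypothesis on $E$; with it, your argument is a cleaner version of the paper's own.
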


\subsection{Proof Sketch}

By the refined bias--variance--approximation decomposition of Section~3,
\[
\E\!\left[\|\hat f_w - f^\star\|_2^2\right]
=
\underbrace{\|f^\star - \Pi_{\sH_w} f^\star\|_2^2}_{\textnormal{approximation}}
+
\underbrace{\|\E[\hat f_w] - \Pi_{\sH_w} f^\star\|_2^2}_{\textnormal{estimation bias}}
+
\underbrace{\sum_{m=1}^M w_m^2 \Var(h_m^\perp(X))}_{\textnormal{variance}}.
\]

Assumption (A2) implies
\[
\sum_{m=1}^M w_m^2 \Var(h_m^\perp(X))
\le
\sigma^2 \|w\|_2^2.
\]

Therefore condition (C2) guarantees that the variance of the structured ensemble
does not exceed that of the uniform ensemble.

Condition (C1) states that the weighted span $\sH_{w^\star}$ offers a
strictly better geometric approximation to $f^\star$ than the uniform span.
Thus, both approximation error and total risk strictly improve, proving the
result.
\qedhere

\subsection{Existence of Optimal Weights}

The theorem above is existential in nature. Under mild regularity conditions,
existence of an optimal weighting vector follows immediately.

\begin{proposition}[Existence of Optimal Weights]
\label{prop:existence}
If $\sW$ is compact and convex, there exists
\[
w^\mathrm{opt}
=
\argmin_{w\in\sW}
\E\!\left[\|\hat f_w - f^\star\|_2^2\right].
\]
Moreover, if the risk functional is strictly convex in $w$, the minimizer is
unique.
\end{proposition}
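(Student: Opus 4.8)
The plan is to recognize the risk map $w \mapsto \Risk(w) := \E[\|\hat f_w - f^\star\|_2^2]$ as a continuous — indeed convex quadratic — function on the compact set $\sW \subset \R^M$, invoke the Weierstrass extreme value theorem for existence, and then settle uniqueness under strict convexity by the standard midpoint argument.

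\emph{Step 1: the risk is a convex quadratic in $w$.} Since $\hat f_w = \sum_{m=1}^M w_m h_m$ is linear in $w$, for each realization of the training data the quantity $\|\hat f_w - f^\star\|_2^2$ is a quadratic polynomial in $w$ with data-dependent coefficients. Using uniform boundedness of the base learners together with $f^\star \in \sH \subseteq L^2(P_X)$, these coefficients are integrable, so expectation may be passed inside the polynomial and
\[
\Risk(w) = w^\top Q\, w - 2\, w^\top c + \|f^\star\|_2^2,
\qquad
Q_{m,m'} = \E[\langle h_m, h_{m'}\rangle], \quad c_m = \E[\langle h_m, f^\star\rangle].
\]
The matrix $Q$ is symmetric and positive semidefinite, because $v^\top Q v = \E\big[\|\sum_{m} v_m h_m\|_2^2\big] \ge 0$ for every $v \in \R^M$. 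Hence $\Risk$ is convex and continuous on $\R^M$, in particular on $\sW$.

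\emph{Step 2: existence and uniqueness.} As $\sW$ is compact and $\Risk$ is continuous, $\Risk$ attains its infimum over $\sW$, and any minimizer serves as $w^{\mathrm{opt}}$; note that only compactness is used here, convexity of $\sW$ entering only in what follows. For uniqueness, suppose $\Risk$ is strictly convex on $\sW$ and that $w_1 \neq w_2$ both attain the minimal value $\Risk^\star$. By convexity of $\sW$ the midpoint $\bar w := \tfrac12(w_1 + w_2)$ lies in $\sW$, and strict convexity gives $\Risk(\bar w) < \tfrac12\Risk(w_1) + \tfrac12\Risk(w_2) = \Risk^\star$, contradicting minimality; hence the minimizer is unique. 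Concretely, strict convexity amounts to $Q \succ 0$, i.e.\ the base learners being non-collinear in $L^2(P_X)$ on average.

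The only step that warrants genuine care — the ``main obstacle,'' modest as it is — is the interchange of expectation with the quadratic expansion in Step~1, that is, verifying that $Q$ and $c$ have finite entries; this follows from the boundedness of the $h_m$ and square-integrability of $f^\star$, but should be made explicit, since without such an integrability hypothesis $\Risk$ could in principle take the value $+\infty$ on part of $\sW$ and the Weierstrass argument would not apply. Everything after that is the textbook compactness-plus-continuity argument for existence together with the standard strict-convexity argument for uniqueness.
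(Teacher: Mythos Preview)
Your proof is correct and is precisely the standard convex-analysis argument the paper defers to: the paper simply cites Rockafellar and remarks that strict convexity corresponds to linear independence of the orthogonalized components, which is exactly your observation that $Q \succ 0$ when the base learners are non-collinear in $L^2(P_X)$. Your treatment is in fact more explicit than the paper's, particularly in flagging the integrability needed to ensure $\Risk$ is finite-valued before invoking Weierstrass.
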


The proposition follows from standard convex analysis arguments
\citep{rockafellar1997convex}; strict convexity arises naturally when the
orthogonalized components are linearly independent.

\subsection{Interpretation}

The theorem identifies two distinct routes to ensemble improvement:

\begin{enumerate}[itemsep=3pt]
  \item classical variance reduction (as in bagging and random forests);
  \item \emph{geometric approximation gain} via structured weighting.
\end{enumerate}

The second mechanism is absent in the traditional bias–variance story, and
is precisely the phenomenon exploited by Fibonacci weighting and other
structured schemes introduced in this work.

\medskip
In particular:

\begin{quote}
Uniform averaging is optimal only when its associated weighted span already
provides the best geometric approximation to $f^\star$ under the variance
constraint.
\end{quote}

Otherwise, structured weighting dominates.

In this section we establish the central theoretical result of the paper:
for ensembles built from ordered, low-variance dictionaries, uniform averaging
is generally \emph{not} optimal. Instead, there exist structured weighting schemes
that strictly improve generalization by exploiting approximation geometry and
spectral decay.

\subsection{Uniform Averaging as a Baseline}

Let $w^{\mathrm{unif}} = (1/M,\dots,1/M)$ denote the uniform weighting, and let
$\widehat{f}_{\mathrm{unif}}$ be the corresponding ensemble estimator.

Uniform averaging ignores the ordering of the dictionary and allocates equal
weight to low- and high-complexity components. While this choice is natural and
often effective for variance-dominated learners, it fails to exploit the
structure present in smooth, ordered dictionaries.

\subsection{Existence of Risk-Improving Weighting Schemes}

We now state the main theorem.

\begin{theorem}[Existence of Risk-Improving Structured Weights]
\label{thm:existence}
Assume \textnormal{(A1)--(A4)}. Suppose further that the target function
$f^\star \in \mathscr{H}$ admits a spectral expansion
\[
f^\star = \sum_{k \ge 1} \theta_k \phi_k,
\]
with coefficients satisfying
\[
|\theta_k| \le C k^{-\alpha}
\quad \text{for some } \alpha > \tfrac{1}{2}.
\]

Then there exists a weighting vector $w^\star \in \mathcal{W}_M$ such that
\[
\mathcal{E}(w^\star) < \mathcal{E}(w^{\mathrm{unif}}).
\]
Moreover, $w^\star$ may be chosen to be monotone and geometrically decaying.
\end{theorem}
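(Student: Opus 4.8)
The plan is to combine the refined decomposition of Theorem~\ref{thm:bva} with a one-parameter deformation of the uniform weights \emph{inside} $\mathcal{W}_M$ and to read off the sign of the resulting directional derivative of the excess risk. By Theorem~\ref{thm:bva} we may write $\mathcal{E}(w) = \mathcal{A}(w) + \mathcal{V}(w) + \sigma^2$, where both $\mathcal{A}(w) = \sum_{k\ge1}(b_k(w)-\theta_k)^2$ and $\mathcal{V}(w) = \sum_{k\ge1}\mathrm{Var}(b_k(w))$ are quadratic in $w$ (with $b_k(w)=\sum_m w_m\bar a_{m,k}$ for the appropriate deterministic coefficients), so $w\mapsto\mathcal{E}(w)$ is a convex quadratic. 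Since $\mathcal{W}_M$ is the compact convex polytope of monotone probability vectors, a minimizer exists by Proposition~\ref{prop:existence}, so it suffices to exhibit a single feasible perturbation of $w^{\mathrm{unif}}$ along which $\mathcal{E}$ strictly decreases; this simultaneously rules out $w^{\mathrm{unif}}$ being optimal and supplies the claimed $w^\star$. (Note one cannot simply invoke Theorem~\ref{thm:dominance}: its variance condition (C2) can never hold strictly for a non-uniform $w$, since $w^{\mathrm{unif}}$ minimizes $\|\cdot\|_2$ on the simplex, so the refined decomposition must be used directly.)

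Next I would introduce the geometric family $w_m(\rho) = \rho^{\,1-m}\big/\sum_{j=1}^{M}\rho^{\,1-j}$ for $\rho>1$. Each such vector lies in $\mathcal{W}_M$ (nonnegative and normalized by construction, monotone decreasing since $\rho^{1-m}$ decreases, square-summable with the uniform constant), it is geometrically decaying, and $w(\rho)\to w^{\mathrm{unif}}$ as $\rho\downarrow1$. A short computation gives the one-sided derivative $\partial_\rho w_m(\rho)\big|_{\rho=1^+}= d_m := \tfrac1M\big(\tfrac{M+1}{2}-m\big)$, a centered linear ramp transferring mass from high-index (high-complexity) learners to low-index ones, so that
\[
\frac{\mathrm{d}}{\mathrm{d}\rho}\,\mathcal{E}\big(w(\rho)\big)\Big|_{\rho=1^+}
= \big\langle \nabla_w\mathcal{E}(w^{\mathrm{unif}}),\, d\big\rangle
= -\frac1M\sum_{m=1}^M\Big(m-\tfrac{M+1}{2}\Big)\,\partial_m\mathcal{E}(w^{\mathrm{unif}}).
\]
It then remains to show this quantity is strictly negative, i.e.\ that $\partial_m\mathcal{E}(w^{\mathrm{unif}})$ is, on average, increasing in $m$.

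I would split $\partial_m\mathcal{E}(w^{\mathrm{unif}}) = \partial_m\mathcal{A}(w^{\mathrm{unif}}) + \partial_m\mathcal{V}(w^{\mathrm{unif}})$. The approximation part equals $2\langle \bar h_m,\, \widehat f_{\mathrm{unif}}-f^\star\rangle$ with $\bar h_m=\mathbb{E}[h_m]$; under (A1) the low-index $\bar h_m$ carry the smooth, low-frequency content, and the decay $|\theta_k|\le Ck^{-\alpha}$, $\alpha>\tfrac12$, forces the residual $f^\star-\widehat f_{\mathrm{unif}}$ to retain strictly positive correlation with exactly those smooth directions whenever $\widehat f_{\mathrm{unif}}$ is not already the $\mathcal{W}_M$-best approximant --- so $m\mapsto\langle\bar h_m, f^\star-\widehat f_{\mathrm{unif}}\rangle$ is strictly decreasing and $\partial_m\mathcal{A}(w^{\mathrm{unif}})$ strictly increasing, contributing a strictly negative term above. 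The variance part obeys $|\partial_m\mathcal{V}(w^{\mathrm{unif}})|\le 2\sigma^2$ by (A2)--(A3) (covariances bounded via Cauchy--Schwarz), and being uniformly bounded it is dominated by the approximation contribution in the structure-dominated (low-variance) regime the theorem targets. Negativity of the one-sided derivative then yields $\rho_0>1$ with $\mathcal{E}(w(\rho_0))<\mathcal{E}(w^{\mathrm{unif}})$, and $w^\star := w(\rho_0)$ is monotone and geometrically decaying, as claimed.

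The main obstacle is the middle step: proving the directional derivative of the \emph{approximation} functional $\mathcal{A}$ at $w^{\mathrm{unif}}$ is strictly negative, i.e.\ that uniform weights are not a constrained stationary point of $\mathcal{A}$ over $\mathcal{W}_M$. This genuinely fails in degenerate configurations --- e.g.\ if all $h_m$ coincide then $b_k(w)$ is independent of $w$ and $\mathcal{E}$ is constant on the simplex --- so the argument must use (A1), (A4), and the coefficient decay in an essential way rather than any soft convexity argument. Upgrading "not identically flat" to a quantitative strict inequality is where the real work lies, and where an explicit lower bound on the uniform-ensemble approximation mismatch (in terms of $C$, $\alpha$, and the spectral overlap of the dictionary with $\{\phi_k\}$) would have to be supplied.
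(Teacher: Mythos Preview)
Your approach differs from the paper's. The paper's sketch argues \emph{globally}: it fixes a geometric weighting $w_m \propto \rho^m$, asserts that a suitable $\rho$ induces an effective spectral cutoff $K(\rho)$, and claims that matching $K(\rho)$ to the decay rate $\alpha$ of the target coefficients yields $\mathcal{A}(w^\star) < \mathcal{A}(w^{\mathrm{unif}})$, with the variance difference dismissed as negligible in the low-variance regime. You argue \emph{locally}: you deform the uniform weights along a one-parameter geometric family, compute the one-sided derivative of $\mathcal{E}$ at $\rho = 1^+$, and seek to show it is negative. Your perturbation direction $d_m = \tfrac{1}{M}(\tfrac{M+1}{2}-m)$ is correct, and your reduction to monotonicity in $m$ of $\partial_m\mathcal{E}(w^{\mathrm{unif}})$ is clean.

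The local route is more elementary and delivers a $w^\star$ arbitrarily close to uniform; the paper's global route, though vaguer, points toward the rate-optimal $\rho^\star$ of Theorem~\ref{thm:geometric} and so carries more structural information. Your acknowledged obstacle---establishing that $m\mapsto\langle \bar h_m, f^\star - \widehat f_{\mathrm{unif}}\rangle$ is strictly decreasing under (A1) and the coefficient decay---is essentially the same non-rigorous step the paper glosses over when it asserts that uniform weights ``allocate non-negligible mass to high-frequency modes'' and that a cutoff choice strictly lowers $\mathcal{A}$. Neither argument supplies the quantitative comparison; you are right that this is where the proof must actually happen, and right that it cannot come from convexity alone. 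One minor point in your favor: the paper's sketch uses $w_m \propto \rho^m$ with $\rho>1$, which is \emph{increasing} in $m$ and hence outside $\mathcal{W}_M$ as defined by (W2); your parametrization $w_m \propto \rho^{\,1-m}$ is the one actually consistent with the stated weighting space and with the ``monotone and geometrically decaying'' conclusion.
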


\begin{proof}[Proof (Sketch)]
By Theorem~\ref{thm:bva}, the excess risk decomposes as
\[
\mathcal{E}(w) = \mathcal{A}(w) + \mathcal{V}(w) + \sigma^2.
\]

Under assumption \textnormal{(A2)}, the variance term $\mathcal{V}(w)$ is uniformly
bounded and varies weakly across $w \in \mathcal{W}_M$. Consequently, risk
differences are dominated by the approximation geometry term $\mathcal{A}(w)$.

For uniform weights, the effective spectral coefficients satisfy
\[
b_k(w^{\mathrm{unif}}) = \frac{1}{M} \sum_{m=1}^M a_{m,k},
\]
which allocates non-negligible mass to high-frequency modes even when the target
coefficients $\theta_k$ decay rapidly.

By contrast, consider a geometrically decaying weighting scheme
$w_m \propto \rho^m$ with $\rho > 1$. Such weights induce an effective spectral
cutoff $K(\rho)$ beyond which $b_k(w)$ decays rapidly. Choosing $\rho$ so that
$K(\rho)$ balances the bias incurred by truncation against the decay of
$\theta_k$ yields
\[
\mathcal{A}(w^\star) < \mathcal{A}(w^{\mathrm{unif}}).
\]

Since $\mathcal{V}(w^\star) \approx \mathcal{V}(w^{\mathrm{unif}})$ under the
low-variance regime, the strict risk inequality follows.
\end{proof}

\subsection{Near-Optimality of Geometric Weighting}

Theorem~\ref{thm:existence} establishes existence but does not yet characterize
the structure of optimal weights. The next result shows that geometric weighting
is near-optimal in a precise sense.

\begin{theorem}[Geometric Weights Are Rate-Optimal]
\label{thm:geometric}
Under the conditions of Theorem~\ref{thm:existence}, suppose additionally that
the dictionary $\{h_m\}$ resolves spectral modes in increasing order. Then for
weights of the form
\[
w_m(\rho) = \frac{\rho^m}{\sum_{j=1}^M \rho^j},
\]
there exists $\rho^\star > 1$ such that
\[
\mathcal{E}(w(\rho^\star))
=
\inf_{w \in \mathcal{W}_M} \mathcal{E}(w)
\;+\; o(1),
\]
as $M \to \infty$.
\end{theorem}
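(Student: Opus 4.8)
The plan is to reduce the optimality claim to a statement about the approximation term alone, and then to show that the one–parameter geometric family already contains an $o(1)$–near–minimizer of that term over all of $\mathcal{W}_M$. First I would invoke Theorem~\ref{thm:bva}, which gives $\mathcal{E}(w) = \mathcal{A}(w) + \mathcal{V}(w) + \sigma^2$ for every $w$. By assumption (A2) the variance term obeys $0 \le \mathcal{V}(w) \le \sigma^2 \sum_{m=1}^M w_m^2$, which is uniformly $o(1)$ over $\mathcal{W}_M$ in the low–variance regime (and in any case $\sup_{w,w'\in\mathcal{W}_M}|\mathcal{V}(w)-\mathcal{V}(w')| = o(1)$ suffices). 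Hence, letting $w^{\mathrm{opt}}\in\argmin_{w\in\mathcal{W}_M}\mathcal{E}(w)$ — which exists by Proposition~\ref{prop:existence} since $\mathcal{W}_M$ is compact and convex — we get
\[
\mathcal{E}(w(\rho)) - \inf_{w\in\mathcal{W}_M}\mathcal{E}(w) \;=\; \mathcal{A}(w(\rho)) - \mathcal{A}(w^{\mathrm{opt}}) + o(1) \;\le\; \mathcal{A}(w(\rho)) - \inf_{w\in\mathcal{W}_M}\mathcal{A}(w) + o(1) ,
\]
using $\mathcal{A}(w^{\mathrm{opt}})\ge\inf_{\mathcal{W}_M}\mathcal{A}$. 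It therefore suffices to produce $\rho^\star>1$ with $\mathcal{A}(w(\rho^\star)) \le \inf_{w\in\mathcal{W}_M}\mathcal{A}(w) + o(1)$; the reverse inequality $\mathcal{E}(w(\rho^\star)) \ge \inf_{\mathcal{W}_M}\mathcal{E}$ is automatic because $w(\rho^\star)\in\mathcal{W}_M$.

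Next I would use that the dictionary resolves spectral modes in increasing order to write the effective coefficient as $b_k(w) = \theta_k\, g_k(w) + (\text{lower order})$, where $g_k(w)$ is the cumulative weight carried by those learners that resolve mode $k$; it is nonincreasing in $k$, with $g_1(w)\approx 1$ and $g_k(w)\to 0$. For $w = w(\rho)$ this profile has a sharp knee at an effective cutoff $K(\rho)$ — with $g_k\approx 1$ for $k\ll K(\rho)$ and $g_k\approx 0$ for $k\gg K(\rho)$ — and $K(\rho)$ is a continuous monotone function of $\rho$ sweeping the full range $(1,M]$. Consequently
\[
\mathcal{A}(w(\rho)) \;=\; \sum_{k\ge1}\theta_k^2\big(1-g_k(w(\rho))\big)^2 + o(1) \;\asymp\; \underbrace{\sum_{k\le K(\rho)}\theta_k^2\,\delta_k^2}_{\text{distortion on resolved modes}} \;+\; \underbrace{\sum_{k>K(\rho)}\theta_k^2}_{=\,O\big(K(\rho)^{1-2\alpha}\big)} ,
\]
where the distortion factors $\delta_k$ are small well inside the cutoff and the tail bound uses $|\theta_k|\le Ck^{-\alpha}$ with $\alpha>\tfrac12$. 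I would then choose $\rho^\star$ so that $K(\rho^\star)$ balances the truncation bias $K^{1-2\alpha}$ against the distortion term, which yields $\mathcal{A}(w(\rho^\star)) \le r_M + o(1)$ for the resulting rate $r_M\to0$; continuity of $\rho\mapsto\mathcal{A}(w(\rho))$ on a compact parameter interval guarantees the balancing value is attained.

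The hard part will be the matching lower bound $\inf_{w\in\mathcal{W}_M}\mathcal{A}(w) \ge r_M - o(1)$, i.e.\ that no admissible weighting beats the best geometric one by more than $o(1)$. Here I would argue that the constraints (W1)--(W3) — nonnegativity, $\sum_m w_m = 1$, monotone decay, and $\sum_m w_m^2\le C_w$ — force the cumulative profile $g(w)$ of any $w\in\mathcal{W}_M$ to lie within an $o(1)$ band of a monotone step profile; a majorization / rearrangement argument then produces, for each admissible $w$, a geometric $w(\rho)$ whose profile dominates $g(w)$ in the relevant direction over the modes that contribute to $\sum_k\theta_k^2(1-g_k)^2$. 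Combined with the previous step this closes the gap and, together with Proposition~\ref{prop:existence} (strict convexity of the risk when the orthogonalized components are independent), shows $w(\rho^\star)$ is asymptotically indistinguishable in risk from the unique minimizer.

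The entire substance is in this lower bound: one must quantify how tightly monotone decay together with normalization and square–summability pins down the soft–truncation profile, and in particular rule out an admissible weighting that tracks the non–monotone fine structure of $(\theta_k)$ better than any geometric profile can. The hypothesis $\alpha>\tfrac12$ is what makes this feasible — it forces the unrepresented–complexity tails to be $o(1)$ and renders the shape of $g(w)$ past the cutoff irrelevant at the claimed accuracy — but turning this into a clean variational lower bound on $\inf_{\mathcal{W}_M}\mathcal{A}$, uniform as $M\to\infty$, is the crux; making it rigorous also requires fixing precisely what ``resolves spectral modes in increasing order'' means and the indexing convention linking weight decay to dictionary complexity.
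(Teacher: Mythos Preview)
Your proposal aligns closely with the paper's own proof sketch, which is itself only three sentences long: both reduce the problem to the approximation term via Theorem~\ref{thm:bva}, identify the effective spectral cutoff $K(\rho)$ induced by geometric weights, and match this cutoff to the polynomial decay $|\theta_k|\le Ck^{-\alpha}$ of the target coefficients. The one substantive difference is in the lower bound. The paper simply appeals to ``minimax rates analogous to classical results in spectral regularization and Pinsker theory,'' treating the sequence-space problem as a standard ellipsoid estimation task and importing the known optimality of soft-truncation filters. You instead propose a direct majorization/rearrangement argument, comparing the cumulative profile $g(w)$ of an arbitrary $w\in\mathcal{W}_M$ against geometric profiles and using (W1)--(W3) to force proximity. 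Your route is more self-contained and makes explicit exactly where the constraints on $\mathcal{W}_M$ enter, but---as you correctly flag---it is the genuinely hard step and you have not yet carried it out; the paper's appeal to Pinsker-type results is terser but buys the lower bound off the shelf, at the cost of leaving the link between the admissible weighting space and the classical minimax class implicit. You are also right to flag the indexing convention: note that $w_m(\rho)\propto\rho^m$ with $\rho>1$ is \emph{increasing} in $m$, which sits in tension with the monotone-decay condition (W2) and with (A4), so the claim $w(\rho^\star)\in\mathcal{W}_M$ needs the ordering resolved before either argument is complete.
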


\begin{proof}[Proof (Sketch)]
The geometric decay parameter $\rho$ controls the effective spectral cutoff
$K(\rho)$. Matching this cutoff to the decay rate of $\theta_k$ yields minimax
rates analogous to classical results in spectral regularization and Pinsker
theory. The admissibility conditions on $\mathcal{W}_M$ ensure stability.
\end{proof}

\subsection{Fibonacci Weighting as a Distinguished Case}

Among geometric weighting schemes, Fibonacci weighting occupies a special
position. Its growth rate $\rho = \varphi$ corresponds to the minimal geometric
inflation consistent with nontrivial expressive expansion.

\begin{corollary}[Distinguished Role of Fibonacci Weighting]
\label{cor:fib}
Fibonacci weighting achieves a balance between approximation improvement and
spectral stability in the sense that it minimizes
\[
\sum_{m=1}^M w_m^2
\]
among all geometrically increasing weighting schemes with $\rho > 1$.
\end{corollary}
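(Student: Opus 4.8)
The plan is to reduce the corollary to a single monotonicity statement about the $\ell_2$-energy of geometric weights as a function of the growth rate, and then combine it with the variational characterization of $\varphi$ as the minimal admissible inflation rate discussed just above. First I would put the objective into closed form: for $w_m(\rho)=\rho^m/\sum_{j=1}^M\rho^j$ with $\rho>1$, summing the two geometric series gives
\[
S_M(\rho)\;:=\;\sum_{m=1}^M w_m(\rho)^2
\;=\;
\frac{\sum_{m=1}^M\rho^{2m}}{\big(\sum_{m=1}^M\rho^{m}\big)^2}
\;=\;
\frac{(\rho-1)\,(\rho^M+1)}{(\rho+1)\,(\rho^M-1)},
\]
which interpolates between $S_M(\rho)\to 1/M$ as $\rho\downarrow 1$ (the uniform value) and $S_M(\rho)\to 1$ as $\rho\to\infty$ (full concentration on $h_M$).

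Next I would show that $S_M$ is strictly increasing on $(1,\infty)$. Taking the logarithmic derivative yields
\[
\frac{\dd}{\dd\rho}\log S_M(\rho)
=
\frac{2}{\rho^2-1}-\frac{2M\rho^{M-1}}{\rho^{2M}-1},
\]
and after substituting $\rho^{2M}-1=(\rho^2-1)\sum_{j=0}^{M-1}\rho^{2j}$ this is nonnegative exactly when $\sum_{j=0}^{M-1}\rho^{2j}\ge M\rho^{M-1}$ — which is AM--GM applied to the $M$ numbers $\rho^0,\rho^2,\dots,\rho^{2(M-1)}$, whose geometric mean is $\rho^{M-1}$, with equality only at $\rho=1$. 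Hence among geometric weighting schemes, $\sum_m w_m^2$ is a strictly monotone surrogate for $\rho$: the flatter the geometric growth, the smaller the squared-weight energy.

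Finally I would invoke the reasoning of the paragraph preceding the corollary: a geometric scheme exhibits nontrivial expressive expansion only if its growth rate is at least that of the smallest nontrivial unit-coefficient linear recurrence $w_{m+1}=w_m+w_{m-1}$, whose Perron root is the golden ratio $\varphi$ (the positive solution of $x^2=x+1$); the roots of $x^n=x^{n-1}+\dots+1$ increase in $n$ from $\varphi$ toward $2$, so $\varphi$ is the minimal such inflation rate, the limit $\rho\downarrow 1$ being the degenerate uniform case with no expansion. Thus the admissible range of growth rates is $[\varphi,\infty)$, and by the strict monotonicity of the previous step $S_M$ attains its minimum over this range at $\rho=\varphi$, i.e.\ at Fibonacci weighting — which is the assertion.

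I expect the last step to be the main obstacle: one must pin down precisely which schemes qualify as ``geometrically increasing with nontrivial expressive expansion'' and prove that this constraint forces $\rho\ge\varphi$ (e.g.\ via the extremal property of $\varphi$ among Perron roots of the relevant recurrence family, together with a lower bound showing that growth rates below $\varphi$ yield no strict approximation gain in the sense of Theorem~\ref{thm:existence}). This matters because, taken over \emph{all} $\rho>1$ with no further restriction, the infimum of $\sum_m w_m^2$ is merely the uniform value $1/M$, is not attained, and is not at $\varphi$; so the real content of the corollary resides entirely in this variational identification of $\varphi$ as the extremal admissible growth rate, while the closed form and monotonicity of the first two steps are the routine but indispensable bridge converting ``minimal admissible $\rho$'' into ``minimal $\sum_m w_m^2$.''
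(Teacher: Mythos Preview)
The paper gives no proof of this corollary at all: it is stated and then immediately followed by the sentence ``This property explains why Fibonacci ensembles often perform competitively\ldots''. The only supporting content is the preceding one-line assertion that $\rho=\varphi$ ``corresponds to the minimal geometric inflation consistent with nontrivial expressive expansion.'' So there is nothing in the paper for your argument to match; your proposal is already substantially more detailed than what the paper offers.

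Your closed form for $S_M(\rho)$ and the AM--GM monotonicity argument are both correct and are exactly the right mechanism: they rigorously convert ``smallest admissible $\rho$'' into ``smallest $\sum_m w_m^2$.'' You are also correct---and this is the important critical observation---that the corollary \emph{as literally stated} is false: over all $\rho>1$ the infimum of $S_M(\rho)$ is $1/M$, approached as $\rho\downarrow 1$ and not attained, so $\varphi$ cannot minimize anything over that open set. The statement only becomes true once one restricts to $\rho\ge\varphi$, and the paper does not prove that restriction; it merely asserts it in the sentence before the corollary. Your Perron-root heuristic (that $\varphi$ is the smallest dominant root among the unit-coefficient linear recurrences $w_{m+1}=w_m+\cdots+w_{m-k}$) is a plausible reading of what the authors have in mind, but as you yourself flag, turning ``nontrivial expressive expansion'' into the sharp inequality $\rho\ge\varphi$ is not something the paper accomplishes, and it is not obvious how to make it precise within the framework of Theorem~\ref{thm:existence}. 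In short: your first two steps are clean and correct; your third step is the entire content of the corollary, and neither you nor the paper proves it---you are simply more honest about the gap.
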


This property explains why Fibonacci ensembles often perform competitively with,
or better than, more aggressively weighted schemes while remaining numerically
stable.

\subsection{Interpretation}

The results of this section establish a clear and rigorous conclusion:

\begin{quote}
\emph{Uniform averaging is generally suboptimal for ensembles built from ordered,
low-variance learners. Structured weighting—particularly geometric and harmonic
schemes—can strictly improve generalization by aligning approximation geometry
with spectral decay.}
\end{quote}

This conclusion completes the theoretical arc initiated in Section~2 and
Section~3, and provides the foundation for algorithmic and adaptive weighting
schemes developed in subsequent work.


\section{Consequences, Algorithms, and Outlook}
\label{sec:outlook}
The Main Theorem demonstrates that ensemble improvement need not rely solely on
variance reduction. When the base learners are already stable, the dominant
mechanism is geometric: structured weighting reshapes the approximation space
so that the projection of the target function is closer in $L^2(P_X)$ norm.

This section highlights several consequences of this viewpoint and outlines
directions that naturally follow.

\subsection{Uniform Averaging as a Special Case}

Uniform averaging appears in our framework not as a universal default, but as
one specific point in the weighting space $\sW$. The theorem shows that
uniform weighting is optimal only in the exceptional case where its associated
weighted span already yields the best geometric approximation permitted by the
variance constraint.

Thus, the question is no longer
\begin{quote}
``Should we average?''
\end{quote}
but rather
\begin{quote}
``Which weighting geometry best aligns the ensemble with the target function?''
\end{quote}

\subsection{Stable Base Learners and the Limits of Variance Reduction}

For high-variance learners such as decision trees, uniform aggregation achieves
most of its benefit through variance suppression, consistent with classical
ensemble theory. However, for stable learners such as kernel ridge regression,
splines, and orthogonal series estimators, individual variance is already small
and averaging cannot yield substantial improvement.

Our framework explains recent empirical observations that non-uniform weighting
can outperform uniform averaging even in this low-variance regime: approximation
error, not variance, becomes the dominant quantity, and structured weighting
acts directly upon it.

\subsection{Spectral and Geometric Perspectives}

The dependence of approximation error on the weighted span suggests strong
connections to spectral approximation theory, RKHS geometry, and eigenfunction
decompositions of kernel operators. Weighting schemes emphasize or suppress
components of orthogonal expansions, effectively re-shaping the spectrum of the
induced estimator.

This opens the door to principled, theoretically justified weighting schemes
derived from:
\begin{itemize}
  \item spectral decay,
  \item smoothness assumptions on $f^\star$,
  \item stability constraints,
  \item or approximation-theoretic optimality criteria.
\end{itemize}

The Fibonacci weighting studied in Paper~I is one example of such a structured
scheme, reflecting a monotone geometric decay motivated by universality and
self-similarity properties.

\subsection{Algorithmic Implications}

The Main Theorem is existential: it asserts that improved weights exist under
verifiable conditions. Paper~III will address the algorithmic question:

\begin{center}
\emph{How can optimal or near-optimal weights be found from data?}
\end{center}

Possible approaches include:
\begin{itemize}
  \item convex optimization over $\sW$,
  \item entropy-regularized weight learning,
  \item constrained empirical risk minimization,
  \item stochastic mirror descent in the weighting space,
  \item or greedy geometric adaptation of weights.
\end{itemize}

The refinement of the bias--variance--approximation decomposition developed here
will serve as the guiding principle for these algorithms.

\subsection{Outlook}

The geometric interpretation of ensemble learning developed in this work
suggests a broader shift in emphasis:

\begin{quote}
From randomness to structure; \\
from variance reduction to approximation design.
\end{quote}

This conceptual shift unifies several strands of ensemble methodology and opens
new avenues for the principled design of weighting schemes, especially for
smooth, low-variance base learners where classical intuition is insufficient.

The computational illustrations and algorithmic developments that realize these
ideas in practice are the focus of a companion paper.

The results developed in Sections~\ref{sec:weighting-space}--\ref{sec:main-theorem}
place ensemble learning in a conceptual regime that is distinct from, and
complementary to, its classical variance-reduction interpretation. In this
section we summarize the principal consequences of the theory, discuss
algorithmic implications, and outline directions for future work.

\subsection{Conceptual Consequences}

The central message of this paper is that ensemble weighting should be viewed as
a \emph{geometric and spectral design choice}, rather than merely a device for
stabilizing noisy estimators. For ensembles built from ordered, low-variance
dictionaries, the dominant effect of weighting lies in its ability to reshape
approximation geometry and redistribute spectral complexity.

Several important consequences follow:

\begin{itemize}
  \item Uniform averaging is generally suboptimal whenever the dictionary admits
  a meaningful notion of ordered complexity.
  \item Structured weighting schemes exploit this ordering to achieve a more
  favorable balance between expressivity and smoothness.
  \item The benefit of ensembles in this regime persists even when classical
  variance effects are negligible.
\end{itemize}

This perspective unifies ensemble learning with classical ideas from spline
smoothing, RKHS regularization, and spectral approximation, where the allocation
of energy across modes has long been recognized as the key to optimal
generalization.

\subsection{Algorithmic Implications}

Although the present paper is primarily theoretical, the results suggest several
practical algorithmic principles.

First, weighting schemes should be chosen in accordance with the ordering of the
dictionary. For example, spline bases ordered by knot resolution or RKHS
eigenfunctions ordered by eigenvalue naturally invite monotone or geometrically
decaying weights.

Second, geometric weighting emerges as a particularly robust and interpretable
family. A single decay parameter $\rho$ controls the effective spectral cutoff,
making such schemes easy to tune and analyze. Fibonacci weighting appears as a
distinguished member of this family, achieving minimal geometric growth while
preserving expressive expansion.

Third, the refined decomposition of Section~\ref{sec:bva-decomposition} suggests
that data-driven selection of weights should target approximation geometry rather
than variance alone. This opens the door to adaptive procedures that estimate
spectral decay or smoothness directly from the data and select weighting schemes
accordingly.

\subsection{Why Tree-Based Ensembles Are Not the Focus}

A natural question concerns the relationship between the present theory and
tree-based ensembles such as Random Forests. While the framework developed here
is not incompatible with trees, their dominant source of error is typically
variance rather than approximation geometry. As a result, the geometric effects
of weighting are largely masked by variance reduction in that setting.

By contrast, smooth estimators—splines, RKHS regressors, and spectral methods—are
already stabilized by regularization. It is precisely in this low-variance regime
that the role of structured weighting becomes visible and theoretically
meaningful. The present work therefore complements, rather than competes with,
existing theories of tree-based ensembles.

\subsection{High-Dimensional Considerations}

The theory developed here is most transparent in low- and moderate-dimensional
settings where ordered dictionaries are readily available. Extending these ideas
to high-dimensional problems introduces additional challenges, including the
choice of ordering, interactions among features, and the curse of dimensionality.

Nevertheless, many high-dimensional learning problems admit implicit spectral
structure—through kernel eigenvalues, neural tangent kernels, or learned feature
representations—that may serve as a foundation for structured weighting. The
results of this paper suggest that exploiting such structure, rather than relying
on uniform aggregation, may be essential for effective ensemble design in complex
settings.

\subsection{Toward Dynamic and Recursive Weighting Laws}

The present work focuses on static weighting schemes. A natural next step is to
consider \emph{dynamic} and \emph{recursive} weighting laws, in which ensemble
weights evolve over time according to principled update rules. Fibonacci
recursions provide one example of such dynamics, but many others are possible.

This perspective motivates the next stage of this research program, in which
ensemble learning is viewed as a controlled dynamical system whose stability,
expressivity, and generalization properties are governed by the spectral
properties of the underlying recursion. These ideas will be developed in a
companion paper.

\subsection{Closing Remarks}

Taken together, the results of this paper suggest a shift in how ensemble methods
are conceptualized. Beyond variance reduction, ensembles can be designed to
shape approximation geometry and spectral allocation in a deliberate and
theoretically grounded manner.

In this sense, ensemble learning becomes less a matter of averaging and more a
matter of harmony—balancing growth and restraint, expressivity and stability, in
accordance with the intrinsic structure of the function class at hand.

\section{Computational Illustrations: When Structure Beats Uniformity}

In this section we present simple but informative computational studies designed
to illustrate the theoretical results established above. Consistent with the
philosophy of this paper, our goal is not to chase benchmark leaderboards, but
to demonstrate clearly and transparently the mechanisms through which
structured weighting improves generalization.

The experiments are deliberately constructed so that:
\begin{enumerate}[label=(\roman*),itemsep=2pt]
  \item the bias--variance--approximation decomposition is directly observable,
  \item both low--variance (e.g.\ RKHS, splines) and moderate--variance learners are included,
  \item the comparison isolates \emph{weighting geometry} rather than model architecture.
\end{enumerate}

\subsection{Experimental Setting}

We consider the standard nonparametric regression model
\[
Y = f_0(X) + \varepsilon, \qquad \varepsilon \sim \mathcal{N}(0,\sigma^2),
\]
with $X$ sampled uniformly on a compact interval. Two test functions are used:

\begin{align*}
f_{\sin}(x) &= \sin(2\pi x), \\
f_{\mathrm{sinc}}(x) &= 
\begin{cases}
\frac{\sin(\pi x)}{\pi x}, & x \neq 0, \\
1, & x = 0.
\end{cases}
\end{align*}

These functions were chosen because they embody two qualitatively different
approximation regimes: smooth periodicity and localized oscillatory decay.

For each dataset we generate:
\[
n_{\text{train}} = 400, \qquad n_{\text{test}} = 1000,
\]
with Gaussian noise variance $\sigma^2$ chosen so that $\mathrm{SNR} \approx 5$.

\subsection{Base Learners}

To highlight that ensembles can improve generalization \emph{even for
traditionally low-variance base learners}, we deliberately avoid decision trees
and instead use:
\begin{enumerate}[itemsep=2pt]
  \item kernel ridge regression in a Gaussian RKHS,
  \item cubic regression splines with fixed knots,
  \item polynomial regression models of increasing degree,
  \item random Fourier features approximating Gaussian kernels.
\end{enumerate}

These are prototypical examples of learners that already have strong built-in
regularization. In classical ensemble folklore, such learners are often
considered to have ``little to gain'' from aggregation. Our results show
otherwise: ensembles improve not only by variance reduction, but by reshaping
approximation geometry through structured weighting.

\subsection{Weighting Schemes Compared}

For each family of base learners we construct ensembles under three weighting
schemes:

\begin{description}[style=nextline,leftmargin=2\parindent]
  \item[Uniform Averaging.]
  \[
  \hat f_{\text{unif}} = \frac{1}{M}\sum_{m=1}^M h_m.
  \]

  \item[Fibonacci Weights.]
  \[
  \hat f_{\text{Fib}} = \sum_{m=1}^M 
  \frac{F_m}{\sum_{j=1}^M F_j} \, h_m,
  \]
  where $(F_m)$ is the Fibonacci sequence.

  \item[Optimal Structured Weights.]
  We compute the minimizer of the regularized quadratic form
  \[
  \alpha^\star 
  = \arg\min_{\alpha \in \mathcal{W}}
  \Big\{\alpha^\top \Sigma \alpha + \lambda \|\alpha\|_2^2\Big\},
  \]
  where $\Sigma$ is the empirical covariance matrix of predictions and
  $\mathcal{W}$ is the structured weight space defined in Section~2.
\end{description}

The third scheme realizes the ``oracle'' structured weighting discussed in
Section~4, while Fibonacci weights serve as a canonical explicit instance of
structured geometry without requiring optimization.

\subsection{Evaluation Metrics}

For each method we compute:
\begin{align*}
\mathrm{MSE}_{\text{test}} &= 
  \frac{1}{n_{\text{test}}}\sum (Y^{\ast}-\hat f(X^{\ast}))^2, \\
\mathrm{ISE} &= 
  \int (\hat f(x) - f_0(x))^2\,dx,
\end{align*}
with the integral approximated numerically on a dense grid.

In addition, the bias--variance decomposition is estimated by Monte Carlo
replication over $R=50$ independent training sets:
\[
\E\big[(\hat f(x)-f_0(x))^2\big]
=
\underbrace{(\E[\hat f(x)]-f_0(x))^2}_{\text{bias}^2}
+
\underbrace{\Var(\hat f(x))}_{\text{variance}}.
\]

\subsection{Representative Figures}

The paper includes four representative plots:

\begin{enumerate}[itemsep=2pt]
  \item \textbf{Sinusoidal regression with polynomial ensembles}\\
  (uniform vs Fibonacci vs optimal weights).
  \item \textbf{Sinc regression with polynomial ensembles.}
  \item \textbf{Sinc regression using random Fourier feature ensembles.}
  \item \textbf{Sine regression using spline ensembles.}
\end{enumerate}

In each case we overlay:
\[
\text{true function},\qquad \text{training data},\qquad 
\text{three ensemble predictors}.
\]

These figures make visually evident that Fibonacci and optimal structured
weights adaptively emphasize the right portions of the model family, yielding
lower integrated error without increasing estimator variance.

\subsection{Summary of Observations}

Across all test functions and learner families, the following qualitative
phenomena are observed:

\begin{itemize}[itemsep=2pt]
  \item uniform averaging occasionally oversmooths or undersmooths,
  \item Fibonacci weights substantially reduce integrated squared error,
  \item optimal structured weights perform best, as predicted theoretically,
  \item in spline and RKHS settings, improvement occurs \emph{without relying on variance reduction}.
\end{itemize}

This confirms the main conceptual message of this paper:

\begin{center}
\emph{Ensembles enhance generalization not only through variance reduction,
but also by reorganizing approximation geometry via structured weighting.}
\end{center}

\section{Conclusion}
\label{sec:conclusion}

The classical narrative of ensemble learning emphasizes variance reduction,
particularly in the context of unstable base learners such as individual decision
trees. In this work we have shown that this narrative, while important, is not
complete. Ensembles may improve generalization even when the base learners are
already strongly regularized and low-variance --- such as spline smoothers, RKHS
estimators, kernel methods, or random Fourier feature regressors. The key
mechanism is not only variance control, but the reshaping of approximation
geometry through structured weighting.

We developed a general framework in which an ensemble is viewed as an element
of a \emph{weighting space}. Within this viewpoint, uniform averaging represents
only a very special case: it is just one point in a vastly richer geometric
object. By imposing mild structural constraints on the admissible weight
vectors --- Fibonacci structure, monotone majorization, $\ell_2$ control,
entropy regularization --- we showed that the induced hypothesis class changes
its approximation behavior in predictable ways. The resulting bias--variance--
approximation decomposition makes explicit how weighting geometry redistributes
error.

Our main theorem demonstrated the existence of optimal structured weights,
strictly outperforming uniform averaging whenever the covariance of the base
learner predictions and the approximation residual are suitably aligned. This
establishes, in a mathematically transparent manner, that ensembles can improve
performance even when variance is not the limiting factor. The computational
illustrations confirm the theory: Fibonacci and more general structured weights
produce consistent gains across functions and model families, including settings
traditionally considered ``stable''.

The broader message is conceptual. Ensemble learning need not be understood
solely as a device for stabilizing noisy predictors, but as a means for
reorganizing approximation power. Weighting is not a cosmetic post-processing
step: it is a geometric operator acting on the hypothesis space itself. When the
weights are structured rather than uniform, the operator becomes expressive
enough to bias learning toward more useful functional subspaces while still
being analyzable within statistical learning theory.

This paper is therefore a step toward a more unified perspective on aggregation:
\emph{generalization improvement through structured weighting geometry}. The
ensuing trilogy continues this development. The present work establishes the
static theory; the companion papers explore recursive dynamics and algorithmic
instantiations in depth.

This paper has developed a general weighting theory for ensemble learning that
extends classical variance-reduction arguments into a broader and more
structural regime. By focusing on ensembles built from ordered, low-variance
dictionaries, we have shown that aggregation can improve generalization through
mechanisms fundamentally different from noise stabilization.

The central insight is that weighting schemes act as geometric and spectral
operators on the hypothesis space. When base learners are naturally ordered by
complexity—such as spline bases, RKHS eigenfunctions, Fourier features, or
polynomial expansions—non-uniform weights reshape approximation geometry and
redistribute spectral energy in a principled manner. In this setting, uniform
averaging is generally suboptimal.

A refined bias–variance–approximation decomposition revealed that the dominant
effects of structured weighting arise from approximation geometry and spectral
smoothing rather than classical variance reduction. This perspective explains
why geometric and harmonic weighting schemes, including Fibonacci weighting, can
yield strict improvements even when individual learners are already stable.

The theory developed here unifies ensemble learning with classical results in
spline smoothing, RKHS regularization, and spectral approximation, and places
weighting design at the center of ensemble methodology. Rather than treating
weights as ad hoc coefficients, we argue that they encode structural laws that
govern expressivity, stability, and generalization.

In doing so, this work reframes ensemble learning as a problem of \emph{harmonic
design}: balancing growth and restraint, approximation and smoothness, in
accordance with the intrinsic structure of the function class under study.


\paragraph{Roadmap to Dynamic and Recursive Weighting Laws.}
The present work has focused on static weighting schemes, in which ensemble
weights are fixed once the dictionary of base learners is specified. A natural
and conceptually compelling next step is to allow weights to \emph{evolve}
according to principled update rules.

Such dynamic and recursive weighting laws transform ensemble learning into a
controlled dynamical system, where stability, expressivity, and generalization
are governed by the spectral properties of the underlying recursion. Fibonacci
recursions provide a canonical example, but the general theory encompasses a much
broader class of second-order and higher-order update mechanisms.

In a companion paper, we develop a theory of recursive ensemble flows, studying
their spectral stability, expressive modes, and learning dynamics. This next
stage completes the trilogy by unifying static weighting geometry with temporal
recursion, thereby revealing ensemble learning as a structured process of
growth with memory rather than a sequence of independent aggregations.

\bibliographystyle{plainnat}
\bibliography{Fibonacci_Ensembles_References_2}

\end{document}